\definecolor{dgreen}{rgb}{0.157,0.654,0.292}
\renewcommand{\Function}[2]{%
	\csname ALG@cmd@\ALG@L @Function\endcsname{#1}{#2}%
	\def\jayden@currentfunction{#1}%
}
\newcommand{\funclabel}[1]{%
	\@bsphack
	\protected@write\@auxout{}{%
		\string\newlabel{#1}{{\jayden@currentfunction}{\thepage}}%
	}%
	\@esphack
}
\algnewcommand{\algorithmicvariables}{\textbf{global variables}}
\algrenewcommand\algorithmiccomment[1]{\hfill #1}
\newcounter{ctTODO}
\newcommand\filltoend{\leavevmode{\unskip
		\leaders\hrule height.5ex depth\dimexpr-.5ex+0.4pt\hfill\hbox{}%
		\parfillskip=0pt\endgraf}}
\begin{document}
	\title{Towards Utilitarian Combinatorial Assignment with Deep Neural Networks and Heuristic Algorithms\thanks{This work was partially supported by the Wallenberg AI, Autonomous Systems and Software Program (WASP) funded by the Knut and Alice Wallenberg Foundation, and by grants from the National Graduate School in Computer Science (CUGS), Sweden, Excellence Center at Linköping-Lund for Information Technology (ELLIIT), TAILOR funded by EU Horizon 2020 research and innovation programme (GA 952215), and Knut and Alice Wallenberg Foundation (KAW 2019.0350).}}
	\titlerunning{Towards Utilitarian Combinatorial Assignment}
	%
	
	\author{Fredrik Pr\"{a}ntare \and
		Mattias Tiger \and David Bergstr\"{o}m \and \\ Herman Appelgren \and Fredrik Heintz}

	
	%
	\authorrunning{F. Pr\"{a}ntare et al.}
	%
	\institute{Link\"{o}ping University\\581 83 Link\"{o}ping, Sweden\\
		\email{\{firstname.lastname\}@liu.se}}
	
	\maketitle              
	%
	
	
	
	
	\begin{abstract}
		This paper presents preliminary work on using deep neural networks to guide general-purpose heuristic algorithms for performing utilitarian combinatorial assignment. In more detail, we use deep learning in an attempt to produce heuristics that can be used together with e.g., search algorithms to generate feasible solutions of higher quality more quickly. Our results indicate that our approach could be a promising future method for constructing such heuristics.
		

		\keywords{Combinatorial assignment \and Heuristic algorithms \and Deep learning.}
	\end{abstract}
	\section{Introduction}
	A major problem in computer science is that of designing cost-effective, scalable \emph{assignment} algorithms that seek to find a \emph{maximum weight matching} between the elements of sets.  We consider a highly challenging problem of this type---namely \emph{utilitarian combinatorial assignment} (UCA), in which indivisible elements (e.g., items) have to be distributed in bundles (i.e., partitioned) among a set of other elements (e.g., agents) to maximize a notion of aggregated value. This is a central problem in both artificial intelligence, operations research, and algorithmic game theory; with applications in optimal task allocation \cite{prantare2017simultaneous}, winner determination for auctions \cite{sandholm2002winner}, and team formation \cite{prantare2020anytime}.
	
	However, UCA is computationally hard. The state-of-the-art can only compute solutions to problems with severely limited input sizes---and due to Sandholm \cite{sandholm2002algorithm}, we expect that no polynomial-time approximation algorithm exists that can find a feasible solution with a provably good worst-case ratio. With this in mind, it is interesting to investigate if and how low-complexity algorithms can generate feasible solutions of high-enough quality for problems with large-scale inputs and limited computation budgets. In this paper, we present preliminary theoretical and experimental foundations for using function approximation algorithms (e.g., neural networks) together with heuristic algorithms to solve UCA problems. 

	\section{Related Work}
	
	The only UCA algorithm in the literature is an optimal branch-and-bound algorithm \cite{prantare2018anytime,prantare2020anytime}. Although this algorithm greatly outperforms industry-grade solvers like CPLEX in difficult benchmarks, it can only solve fairly small problems. 
	
	Furthermore, a plethora of heuristic algorithms \cite{sen2000searching,keinanen2009simulated,di2010coalition,yeh2016solving,farinelli2017hierarchical} have been developed for the closely related \emph{characteristic function game coalition structure generation} (CSG) problem, in which we seek to find an (unordered) utilitarian partitioning of a set of agents. However, due to the CSG problem's ``unordered nature'', all of these methods are unsuitable for UCA unless e.g., they are redesigned from the ground up.
	
	Apart from this, there has been considerable work in developing algorithms for the \emph{winner determination problem} (WDP) \cite{andersson2000integer,sandholm2002winner,sandholm2002algorithm}---in which the goal is to assign a subset of the elements to alternatives called \emph{bidders} in a way that maximizes an auctioneer's profit. WDP differs from UCA in that the value function is not given in an exhaustive manner, but instead as a list (often constrained in size) of explicit ``bids'' that reveal how much the bidders value different bundles of items. 
	
	Moreover, heuristic search with a learned heuristic function has in recent years achieved super-human performance in playing many difficult games. A key problem in solving games with massive state spaces is to have a sufficiently good approximation (heuristic) of the value of a sub-tree in any given state. Recent progress within the deep learning field with \emph{multi-layered (deep) neural networks} has made learning such an approximation possible in a number of settings. \cite{lecun2015deep,silver2016mastering} 
	
	Using a previously learned heuristic function in heuristic search is an approach of integrating machine learning and combinatorial optimization (CO), and it is categorized as \emph{machine learning alongside optimization algorithms} \cite{bengio2018machine}. Another category is \emph{end-to-end learning}, in which machine learning is leveraged to learn a function that outputs solutions to CO problems directly. While, the end-to-end approach has been applied to graph-based problems such as the \emph{traveling salesman problem} \cite{khalil2017learning} and the \emph{propositional satisfiability problem} \cite{selsam2018learning}, the learned heuristic-based approach remain both a dominating and more fruitful approach \cite{silver2018general,schrittwieser2019mastering}. 
	
	\section{Problem Description}
	The UCA problem that we investigate is defined as the following optimization problem:
	
	\noindent \filltoend
	
	\noindent\textbf{Input:}  A set of elements $A=\{a_1, ..., a_n\}$, a set of alternatives ${T=\{ t_1, ..., t_m \}}$, and a function \mbox{$\bm{v} : 2^A \times T \mapsto \mathbb{R}$} that maps a value to every possible pairing of a bundle $C\subseteq A$ to an alternative $t \in T$.
	
	\medskip 
	
	\noindent\textbf{Output:} A \textit{combinatorial assignment}  (Definition \ref{combinatorial-assignment})  $\langle C_1, ..., C_m \rangle$ over $A$ that maximizes $\sum_{i=1}^{m} \bm{v}(C_i, t_i)$.

	\noindent\filltoend
	
	\begin{definition}\label{combinatorial-assignment}
		$S = \langle C_1,...,C_m \rangle$ is a \emph{combinatorial assignment} over $A$ if $C_i \cap C_j = \emptyset$ for all $i \neq j$, and \mbox{$\bigcup_{i=1}^{m} C_{i} = A$}.
	\end{definition}
	
	Note that there are applications for which it is realistic (or even preferred) to have the value function given in this type of fashion. Examples of this include the strategy game \textit{Europa Universalis 4}, where it is given by the game's programmers to enforce a certain behaviour from its game-playing agents \cite{prantare2020anytime}. Other such examples include when it can be defined concisely but e.g., not given as an explicit list due to the problem's size, such as in winner determination for combinatorial auctions \cite{sandholm2002winner}, or when the value function is a result of machine (e.g., reinforcement) learning.
	
	
	Moreover, we use $\bm{V}(S) = \sum_{i=1}^{m} \bm{v}(C_i, t_i)$ to denote the value of a \textit{partial assignment} (Definition \ref{partial-assignment}) \mbox{$S = \langle C_1,...,C_m \rangle$}, and define $||S|| = \sum_{i=1}^m|C_i|$. The terms \emph{solution} and \emph{combinatorial assignment} are used interchangeably, and we often omit ``over $A$'' for brevity. We also use $\Pi_A$ for the set of all combinatorial assignments over $A$, and define $\Pi_{A}^m = \{ S \in \Pi_A : |S| = m \}$. We say that a solution $S^*$ is \textit{optimal} if and only if $\bm{V}(S^*) = \max_{S \in \Pi^m_A}\bm{V}(S)$.
	
	\begin{definition}\label{partial-assignment}
		If $S$ is a combinatorial assignment over some $A' \subseteq A$, we say that $S$ is a \emph{partial assignment} over $A$. 
	\end{definition}
	
	(Note that we are intentionally using a non-strict inclusion in Definition \ref{partial-assignment} for practical reasons. Consequently, a combinatorial assignment is also a partial assignment over the same agent set.)
	
	%
	
	
	Now, to formally express our approach to UCA, first let $\langle a'_1,...,a'_n \rangle$ be any permutation of $A$, and define the following recurrence:
	\begin{equation}\label{eq:rec} 
	\bm{V}^*(S) = 
	\begin{cases*} 
	\bm{V}(S ) & if $||S|| = n$ \\ 
	\max_{S' \in \bm{\Delta} (S, a'_{||S|| + 1})}\bm{V}^*(S') & otherwise
	\end{cases*}
	\end{equation}
	where $\bm{\Delta}(\langle C_1,...,C_m \rangle ,a) = \{ \langle C_1 \cup \{a\},...,C_m \rangle,...,\langle C_1,...,C_m \cup \{a\} \rangle \}$, and $S$ is a combinatorial assignment over $\{a'_1,...,a'_{||S||}\}$. As a consequence of Theorem \ref{th:uca}, UCA boils down to computing recurrence (\ref{eq:rec}). Against this background, in this paper, we investigate approximating $\bm{V}^*$, in a dynamic programming fashion, using neural networks together with heuristic methods with the goal to find better solutions quicker.

	\begin{theorem}\label{th:uca}
		$\bm{V}^*(S) = \max_{S \in \Pi^m_A}\bm{V}(S)$ if $S = \langle C_1,...,C_m \rangle$ is a partial assignment over $A$ with $C_i = \emptyset$ for $i = 1,...,m$.
	\end{theorem}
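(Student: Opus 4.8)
The plan is to prove a strengthened statement by induction and then specialize it to the all-empty assignment. For any partial assignment $S = \langle C_1, \dots, C_m \rangle$ over $\{a'_1, \dots, a'_{||S||}\}$, I would call $S' = \langle C'_1, \dots, C'_m \rangle \in \Pi^m_A$ an \emph{extension} of $S$ when $C_i \subseteq C'_i$ for every $i$, and establish
\[
\bm{V}^*(S) = \max\{\, \bm{V}(S') : S' \in \Pi^m_A \text{ and } S' \text{ extends } S \,\}.
\]
The theorem then follows immediately by taking $S = \langle \emptyset, \dots, \emptyset \rangle$: every member of $\Pi^m_A$ extends the empty assignment, so the right-hand side becomes exactly $\max_{S \in \Pi^m_A} \bm{V}(S)$.

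I would argue by induction on the number $n - ||S||$ of unplaced elements (equivalently, downward induction on $||S||$). For the base case $||S|| = n$, the partial assignment is already complete, its only extension is itself, and the first branch of recurrence~(\ref{eq:rec}) gives $\bm{V}^*(S) = \bm{V}(S)$, matching the claim. For the inductive step, I would assume the claim for every partial assignment with $j$ unplaced elements and take $S$ with $n - ||S|| = j+1 > 0$. The second branch of the recurrence gives $\bm{V}^*(S) = \max_{S' \in \bm{\Delta}(S, a'_{||S||+1})} \bm{V}^*(S')$, and each such child $S'$ has exactly $j$ unplaced elements, so the induction hypothesis applies to it.

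The crux of the argument is a partition lemma: the collection of complete extensions of $S$ is the union, taken over the $m$ children $S' \in \bm{\Delta}(S, a'_{||S||+1})$, of the complete extensions of those children. Indeed, every complete $S'' \in \Pi^m_A$ extending $S$ must place the element $a'_{||S||+1}$ into exactly one bundle, and that choice identifies the unique child that $S''$ extends; conversely, any extension of a child is an extension of $S$. Substituting the induction hypothesis for each child and using that $\max$ distributes over a union then collapses the nested maxima into a single maximum over all complete extensions of $S$, closing the induction.

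I expect the main obstacle to be bookkeeping rather than anything conceptual: one must verify that $\bm{\Delta}(S, a'_{||S||+1})$ enumerates all---and only---the legal placements of the next element, so that no feasible completion is lost and no infeasible one is introduced, and that the disjointness and covering conditions of Definition~\ref{combinatorial-assignment} are preserved at every step because each element of the fixed permutation is inserted exactly once. A secondary point worth a remark is that the final value does not depend on the chosen permutation $\langle a'_1, \dots, a'_n \rangle$, since the set of complete extensions of the empty assignment equals all of $\Pi^m_A$ regardless of insertion order.
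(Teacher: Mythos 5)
Your proof is correct and takes the same route as the paper: the paper's entire proof is the one-line remark that the result ``follows in a straightforward fashion by induction,'' and your strengthened invariant (that $\bm{V}^*(S)$ equals the maximum value over all complete extensions of $S$) together with the partition lemma over the children $\bm{\Delta}(S, a'_{||S||+1})$ is precisely the induction that remark alludes to. You have simply supplied the bookkeeping the authors chose to omit, and your closing observation that the result is independent of the chosen permutation of $A$ is a worthwhile addition.
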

	\begin{proof}
		This result follows in a straightforward fashion by induction. \qed
	\end{proof}
	
	%
	
	%
	


	
	\section{Heuristic Function Model and Training}
	
	We approximate (\ref{eq:rec}) with a fully connected \textit{deep neural network} (DNN) $f_\theta({S})$ with parameters $\theta$, where ${S}$ is a partial assignment. Our DNN has three hidden layers using \emph{ReLU} \cite{lecun2015deep} activation functions. Each hidden layer has width $mn + 1$. The input is a $m\times n$ \emph{binary assignment-matrix} representation of ${S}$, and a scalar with the partial assignment's value $\bm{V}({S})$. See Fig. \ref{fig:NN-architecture} for a visual depiction of our architecture.

	\begin{figure}[h]
		\centering
		\includegraphics[width=0.7\textwidth]{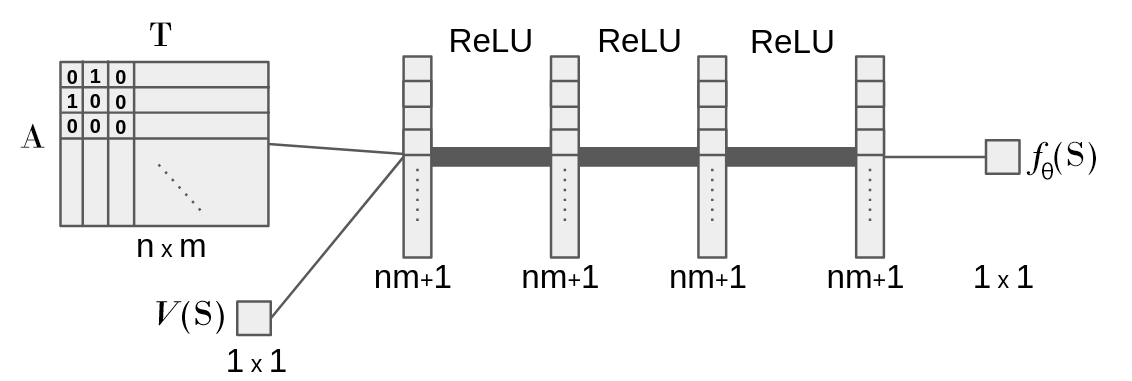}
		\caption{Our multi-layered neural network architecture.}
		\label{fig:NN-architecture}
	\end{figure}
	Our training procedure incorporates generating a data set $\mathcal{D}$ that consists of pairs $\langle {S}, \bm{V}^*({S}) \rangle$, with randomized partial assignments ${S} \in \Pi_{A_i}^m$, where $A_i\subset A$ is a uniformly drawn subset from $A$ with $|A_i|=i$, for $i = n-1,\dots,n-\kappa$, where $\kappa \in \{1,...,n\}$ is a hyperparameter. In our experiments, $\mathcal{D}$ consists of exactly $10^4$ such pairs for every $i$. Note that it is only tractable to compute $\bm{V}^*$ if $\kappa$ is kept small, since in such cases we only have to search a tree with depth $\kappa$ and branching factor $m$ to compute the real value of $\bm{V}^*$. For this reason, we used $\kappa \leq 10$ in our benchmarks. $\theta$ is optimized over the training data using stochastic optimization to minimize:
	\begin{align}
	\mathbb{E}_{\langle {S}, \bm{V}^*({S}) \rangle\sim\mathcal{D}}\big[\bm{V}^*({S}) - f_\theta({S})\big].
	\label{eq:loss}
	\end{align}
	The data set is split $90\%/10\%$ into a training set and a test set. The stochastic optimizer Adam \cite{kingma2014adam} is used for minimizing (\ref{eq:loss}) over the training set. The hyperparameters \textit{learning rate} and \textit{mini-batch size} are optimized using grid search over the test set.  In our subsequent experiments, the same $\bm{V}$ is used as the one used for generating $\mathcal{D}$ by storing the value function's values.  
 
	
	\section{Experiments}
	We use the problem set distributions \emph{NPD} (\ref{eq:npd-definition}) and \emph{TRAP} (\ref{eq:trap-definition}) for generating difficult problem instances for evaluating our method. NPD is one of the more difficult standardized problem instances for optimal solvers \cite{prantare2020anytime}, and it is defined as follows:
	\begin{align}
	\bm{v}(C,t) &\sim \mathcal{N}(\mu,\sigma^2),\label{eq:npd-definition}
	\end{align}
	
	\noindent for $C \subseteq A$ and $t\in T$. TRAP is introduced by us in this paper, and it is defined with:
	\begin{align}
	\bm{v}(C, t) &\sim \mathcal{N}(\bm{\tau}(C),\sigma^2),\label{eq:trap-definition}
	\end{align}
	for all $C \subseteq A$ and $t \in T$, where:
	\begin{align*}
	\bm{\tau}(C) &= \delta \begin{cases}
	|C| - |C|^2, &  0\leq |C| < \tau\\
	|C| - |C|^2 + |C|^{(2 + \epsilon)}, &  \tau\leq |C|
	\end{cases}
	\end{align*}
	for all $C \subseteq A$. $\bm{\tau}$ is defined to make it difficult for general-purpose greedy algorithms that work on an element-to-element basis to find good solutions by providing a ``trap''. This is because when $\epsilon > 0$, they may get stuck in (potentially arbitrarily bad) local optima, since for TRAP, the value $\bm{V}(S)$ of a partial assignment $S$ typically provides little information about $\bm{V}^*(S)$. In contrast, for NPD, $\bm{V}(S)$ can often be a relatively accurate estimation for $\bm{V}^*(S)$. It is thus interesting to deduce whether our learned heuristic can overcome this problem, and consequently outperform greedy approximations.
	
	\begin{figure}[h]
		\centering
		\includegraphics[width=0.46\textwidth]{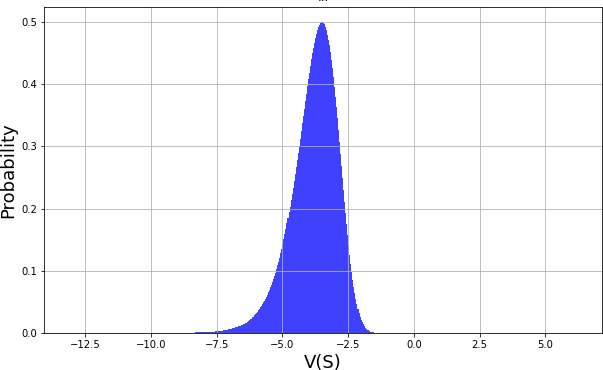}~\hspace{1.0em}~
		\includegraphics[width=0.46\textwidth]{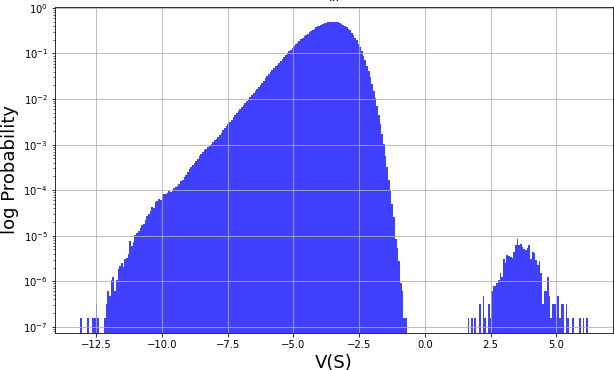}
		\caption{Empirical estimation of $\bm{P}\big(\bm{V}(S)\big)$ for TRAP from $10^8$ samples.}
		\label{fig:empirical-distribution-trap}
	\end{figure}

	We used  $n=20$, $m=10$, $\mu = 1$, $\sigma = 0.1$, $\delta = 0.1$, $\tau = n/2$ and $\epsilon = 0.1$ in our experiments. $n$ and $m$ are chosen to be small enough for exact methods to be tractable.

	To give an idea of how difficult it is to find good solutions for TRAP, we plot an empirical estimation of it in Fig. \ref{fig:empirical-distribution-trap}, generated using $10^8$ draws with (\ref{eq:trap-definition}). The probability of drawing a combinatorial assignment $S \in \Pi_{A}^m$ at random with a value larger than zero, i.e., $\bm{P}\big(\bm{V}(S) > 0\big)$, is approximately $7.43\times 10^{-6}$ (only 743 samples found). This was computed using Monte Carlo integration with $10^8$ samples. 


	\subsection{Training Evaluation}
	For NPD, Fig. \ref{fig:NN-learning} shows that our neural network generalizes from the 1-5 unassigned elements case to 6-10 unassigned elements with only a slight degradation in prediction error variance (figures to the left). We also see that the predictions are slightly worse for predicting higher assignment values than lower ones, but that the performance is fairly evenly balanced otherwise.
	
	Similar figures for TRAP are also shown in Fig. \ref{fig:NN-learning}. Here, the prediction error variance is very high around 5-7 unassigned elements. In the right-most figure, we see that the neural network has problems predicting assignment values close to TRAP's ``jump'' (i.e., $|C| = \tau$). However, outside of value ranges close to the jump, the prediction performance is decent, if not with as high precision as for NPD. Note that TRAP is trained for 1-10 unassigned elements, so no generalization is evaluated in this experiment.
	
		\begin{figure}[H]
		\centering
		\includegraphics[width=0.98\textwidth]{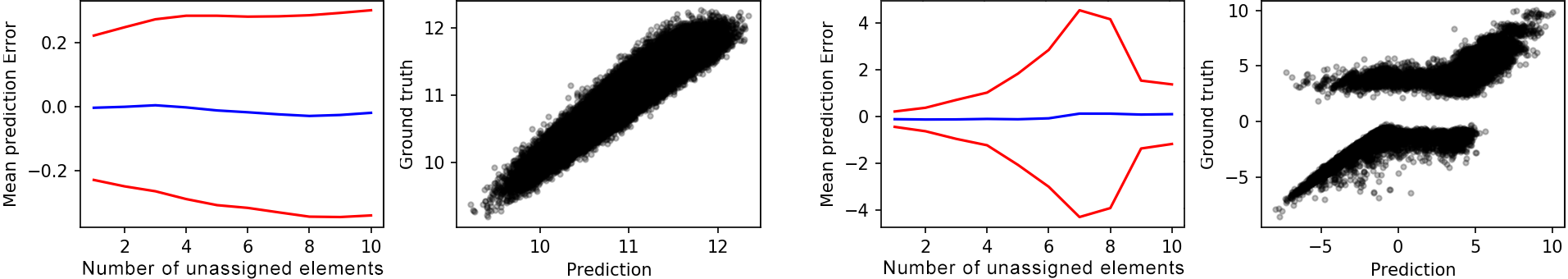}
		\caption{Neural network results for {NPD} (left) and {TRAP} (right). The left figure of each pair shows the mean prediction error and 2 std., as a function of the number of unassigned elements of the partial assignment. The right shows the predicted value compared to the true value.}
		\label{fig:NN-learning}
	\end{figure}
	
	Despite the seemingly large prediction error variance we find that the neural network has a narrower prediction distribution than an uninformed guess. More so for NPD than TRAP, but even a slightly better prediction than random is helpful as a heuristic. This is especially true for TRAP-like distributions, since for them, we previously had no better alternative. Moreover, the prediction errors' distributions are seemingly unbiased.


		\subsection{Benchmarks}
	
		The result of each experiment in the following benchmarks was produced by computing the average of the resulting values from 5 generated problem sets per experiment. In these benchmarks, the goal is to give an indication how well our neural networks perform compared to more naïve approaches for estimating the optimal assignment's value (and thus their suitability when integrated in a heuristic). These estimation methods are coupled with a standard greedy algorithm to draw samples from the search space. We use the following baseline estimations: 1) \textit{current-value} estimation, which uses the partial assignment's value (so that each evaluation becomes a greedily found local optimum); and 2) a random approach, which is a worst-case baseline based on a random estimation (so that each evaluation is a uniformly drawn sample from the search space). The best solution drawn over a number of samples is then stored and plotted in Fig. \ref{benchmarks}. The 95\% confidence interval is also plotted.  The results show that our neural network is able to overcome some problems element-to-element-based heuristics may face with TRAP. For NPD, it performs almost identical to the current-value greedy approach.

	\section{Conclusions}
	We have made the first theoretical and experimental foundations for using deep neural networks together with heuristic algorithms to solve utilitarian combinatorial assignment problems. Albeit much remains to be explored and tested (including generalization, difficulty, what is learned, etcetera), our preliminary results and simple function approximator show that using neural networks together with heuristic algorithms could be a promising future method for finding high-quality combinatorial assignments. 
	
		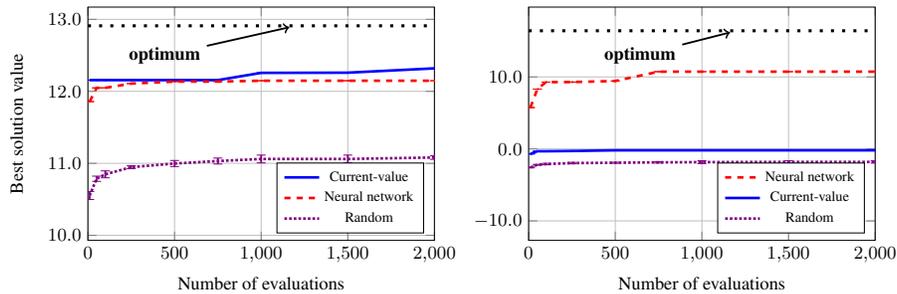
\begin{figure}[H]
		\centering
		\begin{tikzpicture}[scale=0.77]
			\begin{axis}[
				error bars/y dir      = both,
				error bars/y explicit = true,
				error bars/error bar style={solid,mark size=2.2pt,line width=0.9pt},
				enlargelimits=true,
				enlarge x limits=false,
				grid=both,
				height=0.46\textwidth,
				width=0.62\textwidth,
				ymin=10.2, 
				ymax=12.9,
				xtick={0,500,1000,1500,2000},
				xmin=0,
				xmax=2000,
				legend pos= south east,
				y tick label style={
					/pgf/number format/.cd,
					fixed,
					fixed zerofill,
					precision=1,
					/tikz/.cd
				},
				xlabel=Number of evaluations,
				ylabel={Best solution value}
				]      
				
				\node[anchor=west] (source) at (axis cs:200,12.5){\textbf{optimum}};
				\node (destination) at (axis cs:1200,12.91){};
				\draw[->,line width = 0.9pt](source)--(destination);
				
				\addplot[mark= ,blue, solid,line width = 1.3pt] coordinates  
				{ 
					(10,12.1557) +- (0,0)
					(50,12.1557) +- (0,0)
					(100,12.1557) +- (0,0)
					(250,12.1557) +- (0,0)
					(500,12.1557) +- (0,0)
					(750,12.1557) +- (0,0)
					(1000,12.2565) +- (0,0)
					(1500,12.259) +- (0,0)
					(2000,12.3191) +- (0,0)
				};

				\addplot[mark= ,red, dashed,line width = 1.3pt] coordinates  
				{ 
					(10,11.8575) +- (0,0)
					(50,12.048) +- (0,0)
					(100,12.048) +- (0,0)
					(250,12.1073) +- (0,0)
					(500,12.1356) +- (0,0)
					(750,12.1356) +- (0,0)
					(1000,12.1471) +- (0,0)
					(1500,12.1471) +- (0,0)
					(2000,12.1471) +- (0,0)
				};
				
				\addplot[mark= ,violet, densely dotted,line width = 1.3pt] coordinates  
				{ 
					(10,10.5531) +- (0.0552173,0.0552173)
					(50,10.7858) +- (0.0361769,0.0361769)
					(100,10.8471) +- (0.048015,0.048015)
					(250,10.9467) +- (0.0215386,0.0215386)
					(500,10.9972) +- (0.0422321,0.0422321)
					(750,11.0316) +- (0.0413594,0.0413594)
					(1000,11.0604) +- (0.0540559,0.0540559)
					(1500,11.0604) +- (0.0540559,0.0540559)
					(2000,11.0808) +- (0.0258338,0.0258338)
				};
				
				\addplot[mark= ,black, loosely dotted,line width = 1.5pt] coordinates  
				{ 
					(0.0,12.91)
					(2000,12.91)
				};
				
				\addlegendentry{\scriptsize{Current-value}}
				\addlegendentry{\scriptsize{Neural network}}
				\addlegendentry{\scriptsize{Random}}
				
			\end{axis}
		\end{tikzpicture}
		\begin{tikzpicture}[scale=0.77]
			\begin{axis}[
				error bars/y dir      = both,
				error bars/y explicit = true,
				error bars/error bar style={solid,mark size=2.2pt,line width=0.9pt},
				enlargelimits=true,
				enlarge x limits=false,
				grid=both,
				height=0.46\textwidth,
				width=0.62\textwidth,
				ymin=-10.0, 
				ymax=17.0,
				xtick={0,500,1000,1500,2000},
				xmin=0,
				xmax=2000,
				legend pos= south east,
				y tick label style={
					/pgf/number format/.cd,
					fixed,
					fixed zerofill,
					precision=1,
					/tikz/.cd
				},
				xlabel=Number of evaluations,
				]      
				
				\node[anchor=west] (source) at (axis cs:400,13.00){\textbf{optimum}};
				\node (destination) at (axis cs:1200,16.41){};
				\draw[->,line width = 0.9pt](source)--(destination);
				
				\addplot[mark= ,red, dashed,line width = 1.3pt] coordinates  
				{ 
					(10,5.74089) +- (0,0)
					(50,8.31005) +- (0,0)
					(100,9.28347) +- (0,0)
					(250,9.28347) +- (0,0)
					(500,9.43765) +- (0,0)
					(750,10.7306) +- (0,0)
					(1000,10.7306) +- (0,0)
					(1500,10.7306) +- (0,0)
					(2000,10.7306) +- (0,0)
				};
				
				\addplot[mark= ,blue, solid,line width = 1.3pt] coordinates  
				{ 
					(10,-0.699285) +- (0,0)
					(50,-0.322589) +- (0,0)
					(100,-0.322589) +- (0,0)
					(250,-0.287955) +- (0,0)
					(500,-0.194121) +- (0,0)
					(750,-0.194121) +- (0,0)
					(1000,-0.194121) +- (0,0)
					(1500,-0.194121) +- (0,0)
					(2000,-0.194121) +- (0,0)
				};

				\addplot[mark= ,violet, densely dotted,line width = 1.3pt] coordinates  
				{ 
					(10,-2.57577) +- (0.0579972,0.0579972)
					(50,-2.20486) +- (0.0620165,0.0620165)
					(100,-2.10302) +- (0.0983733,0.0983733)
					(250,-1.96686) +- (0.0973956,0.0973956)
					(500,-1.92167) +- (0.0746969,0.0746969)
					(750,-1.86598) +- (0.0983546,0.0983546)
					(1000,-1.8378) +- (0.197373,0.197373)
					(1500,-1.80886) +- (0.157265,0.157265)
					(2000,-1.80886) +- (0.157265,0.157265)
				};

				\addplot[mark= ,black, loosely dotted,line width = 1.5pt] coordinates  
				{ 
					(0.0,16.411)
					(2000,16.411)
				};
				
				\addlegendentry{\scriptsize{Neural network}}
				\addlegendentry{\scriptsize{Current-value}}
				\addlegendentry{\scriptsize{Random}}
				
			\end{axis}
		\end{tikzpicture}
		\caption{\small{The best solution values obtained by the different heuristics when using a greedy algorithm for {{NPD}} (left) and {{TRAP}} (right) problem sets with 20 elements and 10 alternatives.\label{benchmarks}}}
	\end{figure}

	\bibliographystyle{splncs04}
	\bibliography{ecai20}
	
\end{document}